\title{Wayeb: a Tool for Complex Event Forecasting\footnote{Paper published at LPAR-22, 22nd International Conference on Logic for Programming, Artificial Intelligence and Reasoning, Awassa Ethiopia, 16-21 November 2018, \url{https://easychair.org/publications/paper/VKP1}.}}
\author{
Elias Alevizos\inst{1}\inst{3}
\and
Alexander Artikis\inst{2}\inst{1}
\and
Georgios Paliouras\inst{1}
}
\institute{
  NCSR Demokritos,
  Athens, Greece\\
  \email{ \{alevizos.elias,a.artikis,paliourg\}@iit.demokritos.gr }
\and
   University of Piraeus,
   Piraeus, Greece
\and
   National and Kapodistrian University of Athens,
   Athens, Greece
}
\authorrunning{Alevizos, Artikis, Paliouras}
\titlerunning{Wayeb}
\def\dfasr{$\mathit{DFA_{\Sigma^{*}\cdot R}}$}
\def\pmcmr{$\mathit{PMC_{R}^{m}}$}
\def\pfc{$\mathit{\theta_{fc}}$}
\def\trueb{\textsf{\footnotesize TRUE}}
\def\sfa{$\mathit{SFA}$}
\def\where{\textsf{\footnotesize WHERE}}
\def\partition{\textsf{\footnotesize PARTITION BY}}
\def\and{\textsf{\footnotesize AND}}
\newtheorem{definition}{Definition}
\newtheorem{proposition}{Proposition}
\begin{document}

\maketitle

\begin{abstract}
Complex Event Processing (CEP) systems have appeared in abundance during the last two decades.
Their purpose is to detect in real--time interesting patterns upon a stream of events and to inform an analyst for the occurrence of such patterns in a timely manner.
However, there is a lack of methods for forecasting when a pattern might occur before such an occurrence is actually detected by a CEP engine.
We present Wayeb, a tool that attempts to address the issue of Complex Event Forecasting.
Wayeb employs symbolic automata as a computational model for pattern detection and Markov chains for deriving a probabilistic description of a symbolic automaton. 
\end{abstract}

%

%
%

\section{Introduction}

A Complex Event Processing (CEP) system takes as input a stream of events,
along with a set of patterns,
defining relations among the input events,
and detects instances of pattern satisfaction,
thus producing an output stream of complex events .
Typically, an event has the structure of a tuple of values which might be numerical or categorical,
with the \emph{event type} and \emph{timestamp} being the most common attributes.
Since time is of critical importance for CEP,
a temporal formalism is used in order to define the patterns to be detected.
Such a pattern imposes temporal (and possibly atemporal) constraints on the input events,
which, if satisfied, lead to the detection of a complex event.
Efficient processing is of paramount importance since complex events must be detected with very strict latency requirements.
Moreover, both the input events and the patterns may carry a certain degree of uncertainty,
e.g., due to noisy sensors or an incomplete knowledge of the domain.
These issues have been the focus of CEP research for the past two decades. 
See \cite{cugola_processing_2012,alevizos2017probabilistic} for surveys of (probabilistic) CEP. 
In this paper,
we focus on another dimension of CEP research,
that of Complex Event Forecasting (CEF),
i.e.,
the ability of a CEP system to provide forecasts about the possible occurrence of complex events in the future.
Although some conceptual proposals towards this direction have appeared in the literature \cite{fulop_predictive_2012,engel_towards_2011},
there still remains a lack of concrete algorithms and systems that can actually perform CEF.
In what follows,
we present Wayeb,
a CEP/F engine that,
besides detecting CEP patterns,
it can also forecast when such patterns may occur.

\section{Forecasting with Classical Automata}
\label{sec:classical}

In order to allow for a self--contained presentation,
we begin by briefly describing how our proposed method works with classical automata
(for details, see \cite{alevizos2017event}).
In this case,
we assume that the stream $S=t_{1},t_{2},\cdots$ is sequence of symbols from a finite alphabet $\Sigma=\{e_{1},...,e_{r}\}$,
i.e., $t_{i} \in \Sigma$.  
A pattern is defined as a (classical) regular expression $R$ and the usual tools of standard automata theory may be employed \cite{hopcroft_introduction_2007}.
The finite automaton used for event detection is the one corresponding to the expression $\Sigma^{*} \cdot R$
($\cdot$ denotes concatenation and $^{*}$ Kleene--star),
since it should work on streams and be able to start the detection at any point.
Appending $\Sigma^{*}$ at the beginning allows the automaton to skip any number of events.
As a next step,
we construct the deterministic finite automaton (DFA) for $\Sigma^{*} \cdot R$, \dfasr.
This allows us to convert \dfasr\ to a Markov chain.
If we assume that the stream is composed of i.i.d. events from $\Sigma$,
then the sequence $Y=Y_{0},Y_{1},...,Y_{i},...$, where $Y_{0}=q_{0}$ and $Y_{i}=\delta(Y_{i-1},t_{i})$,
with $q_{0}$ the automaton's start state and $\delta$ its transition function, 
is a 1-order Markov chain \cite{nuel_pattern_2008}.
Such a Markov chain, associated with a pattern $R$, is called a Pattern Markov Chain (PMC).
The transition probability between two states connected with the symbol $e_{j}$ is simply its occurrence probability, $P(X_{i}{=}e_{j})$. 
If we assume that the process generating the stream is of a higher order
$m \geq 1$,
then we must first convert \dfasr\ to an \emph{$m$--unambiguous} DFA,
i.e.,
to an equivalent DFA where each state can ``remember'' the last $m$ symbols.
This can be achieved by iteratively duplicating those states of \dfasr$\ $
for which we cannot unambiguously determine the last $m$ symbols that can lead to them 
and then convert it to a PMC, 
denoted by \pmcmr \cite{nicodeme_motif_2002,nuel_pattern_2008}.

After constructing \pmcmr,
we can estimate its waiting-time distributions. 
The waiting-time $W_{R}(q)$ for each non--final state $q$ of \dfasr\ is a random variable,
defined as the number of transitions until \dfasr\ visits for the first time one of its final states:
$\mathit{W_{R}(q){=}inf\{n{:}Y_{0},Y_{1},...,Y_{n}, Y_{0}{=}q, q {\in} Q \backslash F, Y_{n} {\in} F\}}$.
We compute the distribution of $W_{R}(q)$ by converting each state of \pmcmr\ that corresponds to 
a final state of the DFA into an absorbing state
and then re--organizing the transition matrix as follows:
$\boldsymbol{\Pi} = 
\begin{pmatrix} 
\boldsymbol{N_{(l-k) \times (l-k)}} & \boldsymbol{C_{(l-k) \times k}}  \\ 
\boldsymbol{0_{k \times (l-k)}} & \boldsymbol{I_{k \times k}}
\end{pmatrix}$,
assuming \pmcmr\  has a total of $l$ states,
$k$ of which are final.
$\boldsymbol{N}$ holds the probabilities for all the possible transitions between (and only between) the
non-final states,
whereas $\boldsymbol{C}$ holds the transition probabilities from non-final to final states.
Then, the probability for the time index $n$ when the system first enters the set of absorbing states is given by \cite{fu_distribution_2003}:
$P(Y_{n} \in A, Y_{n-1} \notin A,...,Y_{1} \notin A \mid \boldsymbol{\xi_{init}}) =
\boldsymbol{\xi}^{T}\boldsymbol{N}^{n-1}(\boldsymbol{I}-\boldsymbol{N})\boldsymbol{1}$, 
where $A$ denotes the set of absorbing states,
$\boldsymbol{\xi_{init}}$ is the initial distribution on the states
and $\boldsymbol{\xi}$ consists of the $l-k$ elements of $\boldsymbol{\xi_{init}}$ corresponding to non-absorbing states.
Since, in our case, the current state of \dfasr\ is known, 
the vector $\boldsymbol{\xi_{init}^{T}}$ would have  $1.0$ as the value for the element corresponding to the current state (and 0 elsewhere).
$\boldsymbol{\xi}$ changes dynamically as the DFA/PMC moves among its various states and every
state has its own $\boldsymbol{\xi}$, denoted by $\boldsymbol{\xi_{q}}$:
\[ \boldsymbol{\xi_{q}}(i) =
  \begin{cases}
    1.0 & \quad \text{if row } i \text{ of } \boldsymbol{N} \text{ corresponds to state } q   \\
    0  & \quad \text{otherwise} \\
  \end{cases}
\] 
The probability of $W_{R}(q)$ is then given by:
$P(W_{R}(q)=n)=\boldsymbol{\xi_{q}}^{T}\boldsymbol{N}^{n-1}(\boldsymbol{I}-\boldsymbol{N})\boldsymbol{1}$

\begin{figure}[t]
    \centering
    \begin{subfigure}[b]{0.38\textwidth}
        \includegraphics[width=\textwidth]{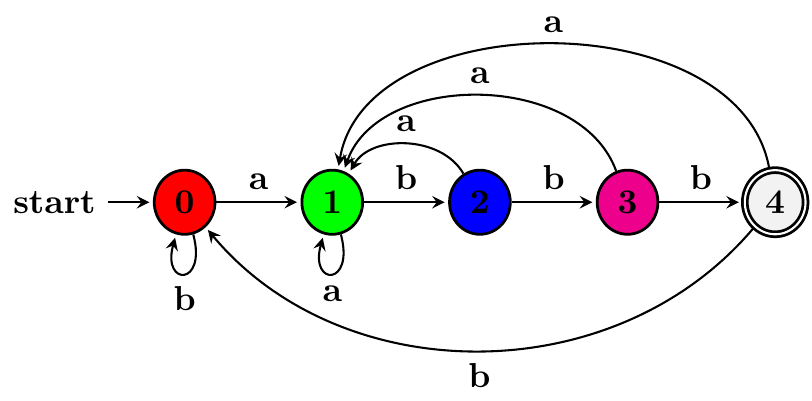}
        \caption{DFA.}\label{fig:dfaabbb}
    \end{subfigure}
    \begin{subfigure}[b]{0.43\textwidth}
        \includegraphics[width=\textwidth]{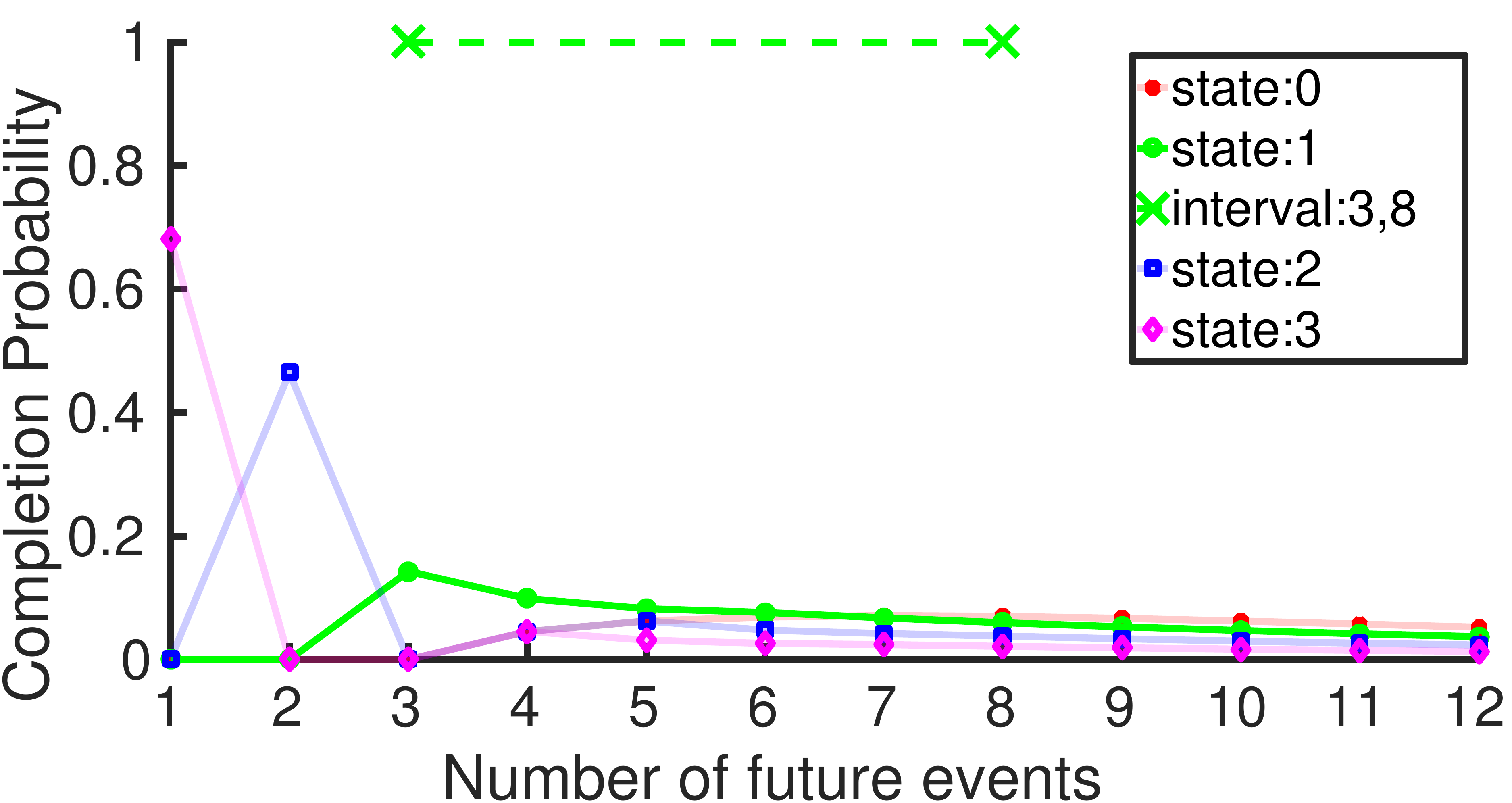}
        \caption{Waiting-time distributions and smallest interval exceeding $\theta_{fc}=50\%$ for state 1.}\label{fig:wt1}
    \end{subfigure}
    \caption{DFA and waiting-time distributions for $R=a\cdot b\cdot b\cdot b$, $\Sigma=\{a,b\}$, $m=0$.}\label{fig:wtdfas}
\end{figure}

The transition matrix $\Pi$ is learnt using the maximum-likelihood estimators for its elements \cite{anderson_statistical_1957}:
$\hat{\pi}_{i,j}=\frac{n_{i,j}}{\sum_{k \in Q} n_{i,k}}=\frac{n_{i,j}}{n_{i}}$
where $n_{i}$ denotes the number of visits to state $i$,
$n_{i,j}$ the number of transitions from state $i$ to state $j$
and $\pi_{i,j}$ the transition probability from state $i$ to state $j$. 
After estimating the transition matrix,
we compute the waiting-time distributions and then build the forecasts associated with each state.
Each forecast is an interval $I=(start,end)$ and its meaning is the following:
given that the DFA is in a certain state, 
we forecast that it will have reached one of its final states at some future point between $start$ and $end$,
with probability at least \pfc.
The calculation of this interval is done by using the waiting-time distribution that corresponds to each state
and the threshold \pfc\ is set beforehand by the user.
We use a single-pass algorithm in order to scan the distribution of each state and find the \emph{smallest} interval whose probability, 
i.e.,
the sum of probabilities of points included in the interval, 
exceeds \pfc. 
In this paper,
we assume stationarity,
thus waiting--time distributions are computed once,
after matrix estimation.
An example of how forecasts are produced is shown in Figure \ref{fig:wtdfas}.
Figure \ref{fig:dfaabbb} shows a simple DFA that can detect the pattern $R=a\cdot b\cdot b\cdot b$ on a stream composed of $a$ and $b$ events.
For its four non-final states,
Figure \ref{fig:wt1} shows their corresponding waiting--time distributions. 
For state $1$, 
this figure also shows the forecast interval produced by scanning the corresponding (green and highlighted) distribution,
when $\theta_{fc}=50\%$. 
If the DFA moves to another state,
then another distribution will be activated and a different forecast interval will be produced
(these other intervals are not shown to avoid cluttering).
\section{Forecasting with Symbolic Automata}
\label{sec:sumbolic_auto}

One limitation of the method presented above is that it requires a finite alphabet,
i.e.,
the input stream may be composed only of a finite set of symbols;
on the other hand, events in a stream are in the form of tuples and look more like data words.
Their attributes might be real--valued, 
taking values from an infinite set, 
and thus cannot be directly handled by DFA.
In order to overcome this limitation,
we have extended our method so that symbolic automata are employed \cite{veanes_rex_2010,dantoni_power_2017}.
Symbolic automata,
instead of having symbols from a finite set on their transitions,
are equipped with predicates from a Boolean algebra,
acting as transition guards.
Such predicates can reference any event attribute,
thus allowing for more expressive patterns.

We now present a formal definition of symbolic automata (for details, see \cite{dantoni_power_2017}).
\begin{definition}[Symbolic Automaton]
A symbolic finite automaton (\sfa) is a tuple \linebreak $M=$($\mathcal{A}$,$Q$,$q^{0}$,$F$,$\Delta$), 
where $\mathcal{A}$ is an effective Boolean algebra, $Q$ is a finite set of states, $q^{0} \in Q$ is the initial state, $F \subseteq Q$ is the set of final states and $\Delta \subseteq Q \times \Psi_{\mathcal{A}} \times Q$ is a finite set of transitions, with $\Psi_{\mathcal{A}}$ being s set of predicates closed under the Boolean connectives.
\end{definition}
Most other definitions from classical automata carry over symbolic automata.
Importantly, 
\sfa\ are determinizable \cite{dantoni_power_2017}.
The definition for deterministic \sfa\ is also similar to that for classical automata,
with the important difference that it is not enough for all transitions from a state to have different predicates.
We require that predicates on transitions from the same state are mutually exclusive, 
i.e.,
at most one may evaluate to \trueb\ (see again \cite{dantoni_power_2017}).
The determinization process for \sfa\ is similar to that for classical automata and is based on the construction of the power--set of the states of the non--deterministic automaton.
Due to this issue of different predicates possibly both evaluating to \trueb\ for the same event/tuple,
we first need to create the \emph{minterms} of the predicates of a \sfa,
i.e.,
the set of maximal satisfiable Boolean combinations of  such  predicates.
When constructing a deterministic \sfa,
these minterms are used as guards on the transitions,
since they are mutually exclusive.

This result about \sfa\ being determinizable allows us to use same technique of converting a deterministic automaton to an $m$--unambiguous automaton and then to a Markov chain.
First, note that the set of minterms constructed from the predicates appearing in a \sfa\ $M$,
denoted by $\mathit{Minterms}(\mathit{Predicates}(M))$,
induces a finite set of equivalence classes on the (possibly infinite) set of domain elements of $M$ \cite{dantoni_power_2017}. 
For example, if $\mathit{Predicates}(M)=\{\psi_{1},\psi_{2}\}$,
then $\mathit{Minterms}(\mathit{Predicates}(M))=\{\psi_{1} \wedge \psi_{2}, \psi_{1} \wedge \neg \psi_{2}, \neg \psi_{1} \wedge \psi_{2}, \neg \psi_{1} \wedge \neg \psi_{2}\}$ and we can map each domain element, which, in our case, is an event/tuple, to exactly one of these 4 minterms:
the one that evaluates to \trueb\ when applied to the element.
Similarly, the set of minterms induces a set of equivalence classes on the set strings (event streams in our case).
For example, if $S{=}t_{1},\cdots,t_{n}$ is an event stream, 
then it could be mapped to $S^{'}{=}a,\cdots,b$,
with $a$ corresponding to $\psi_{1} \wedge \neg \psi_{2}$ if $\psi_{1}(t_{1}) \wedge \neg \psi_{2}(t_{1}) = \trueb$, $b$ to $\psi_{1} \wedge \psi_{2}$, etc.
We say that $S^{'}$ is the stream induced by applying $\mathit{Minterms}(\mathit{Predicates}(M))$ on the original stream $S$.
We first give a definition for an an $m$--unambiguous deterministic \sfa,
by modifying the relevant definition for classical automata \cite{nuel_pattern_2008}:
\begin{definition}[$m$--unambiguous deterministic \sfa]
A deterministic \sfa\ ($\mathcal{A}$,$Q$,$q^{0}$,$F$,$\Delta$) is $m$--ambiguous if there exist $q \in Q$ and $a,b \in T^{m}$ such that $a \neq b$ and $\delta(q,a)=\delta(q,b)$. 
A deterministic \sfa\ which is not $m$--ambiguous is $m$--unambiguous.
\end{definition}
In other words,
upon reaching a state $q$ of an $m$--unambiguous \sfa,
we know which last $m$ minterms evaluated to \trueb,
i.e.,
we know the last $m$ symbols of the induced stream $S^{'}$.
The following proposition then follows:
\begin{proposition}
Let $M$ be a deterministic $m$--unambiguous \sfa, 
$S=t_{1},\cdots,t_{n}$ an event stream 
and $S^{'}=X_{1},\cdots,X_{n}$ the stream induced by applying $T=\mathit{Minterms}(\mathit{Predicates}(M))$ on $S$. 
Assume that $S^{'}$ is a $m$--order Markov process, 
i.e.,
$P(X_{i}=a \mid X_{1},\cdots,X_{i-1}) = P(X_{i} = a \mid X_{i-m},\cdots,X_{i-1})$.
Then the sequence $Y=Y_{m},\cdots,Y_{n}$ defined by $Y_{0}=q^{0}$ and $Y_{i}=\delta(Y_{i-1},t_{i})$ 
(i.e., the sequence of states visited by $M$)
is a $1$--order Markov chain whose transition matrix is given by:
\[ \Pi(p,q) =
  \begin{cases}
    P(X_{m+1}=a \mid X_{1},\cdots,X_{m}=\phi(\delta^{-m}(p))) & \quad \text{if } \delta(p,a)=q   \\
    0  & \quad \text{if } q \notin \delta(p,T) \\
  \end{cases}
\]
where $\delta^{-m}(q) = \{a \in T^{m} \mid \exists p \in Q: \delta(p,a)=q\}$ is the set of concatenated labels of length $m$ that can lead to $q$ and, by definition, is a singleton for $m$--unambiguous \sfa. 
\end{proposition}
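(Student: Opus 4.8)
The plan is to reduce the Markov property of the state sequence $Y$ to the assumed $m$--order Markov property of the induced stream $S^{'}$, with $m$--unambiguity serving as the bridge between states and symbols. The two facts I would lean on are: (i) by determinism, the state prefix $Y_{0},\cdots,Y_{i-1}$ is a function of the symbol prefix $X_{1},\cdots,X_{i-1}$ (given $Y_{0}=q^{0}$), so any event about past states is measurable with respect to past symbols; and (ii) by $m$--unambiguity, $\delta^{-m}(Y_{i-1})$ is a singleton, so the current state alone determines the last $m$ symbols $\phi(\delta^{-m}(Y_{i-1}))$.

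First I would fix the transition to be computed. Assuming $Y_{i-1}=p$, determinism makes the event $\{Y_{i}=q\}$ coincide with $\{X_{i}=a\}$, where $a$ is the minterm with $\delta(p,a)=q$; when no such $a$ exists the probability is $0$, matching the second branch of the formula. This $a$ is unique: since $\delta^{-m}(q)$ is a singleton, the last symbol of the string leading into $q$ is fixed, and any $a$ with $\delta(p,a)=q$ must be exactly that symbol, so $\Pi(p,q)$ is well defined.

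Next I would establish the Markov property itself by computing $P(Y_{i}=q \mid Y_{m},\cdots,Y_{i-1})$. Rewriting the target as $\{X_{i}=a\}$ and using (i), the conditioning event $\{Y_{m}=p_{m},\cdots,Y_{i-1}=p\}$ is a union of symbol--prefix cylinders. On each such cylinder the $m$--order Markov assumption $P(X_{i}=a \mid X_{1},\cdots,X_{i-1}) = P(X_{i}=a \mid X_{i-m},\cdots,X_{i-1})$ collapses the conditioning to the final window, and by (ii) every cylinder in the union shares the same window $(X_{i-m},\cdots,X_{i-1})=\phi(\delta^{-m}(p))$. Hence the per--cylinder conditional probabilities all equal the single value $P(X_{i}=a \mid (X_{i-m},\cdots,X_{i-1})=\phi(\delta^{-m}(p)))$, and this common value factors out of the weighted average over cylinders. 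The result depends only on $p$ (through its forced history) and on $q$ (through the forced symbol $a$), which proves that $Y$ is a $1$--order Markov chain; re--indexing the window to positions $1,\cdots,m$ and $m+1$ under the stationarity assumed in the paper yields exactly $\Pi(p,q)$.

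The step I expect to be the main obstacle is precisely this averaging over cylinders: the conditioning is on the coarser state history rather than on a full symbol history, so I must argue that collapsing to the last $m$ symbols is legitimate even though different underlying symbol prefixes are lumped together. The resolution is that $m$--unambiguity forces all those prefixes to agree on their last $m$ symbols, so the $m$--order Markov reduction returns the same number on every cylinder and the average is trivial. Without $m$--unambiguity the last $m$ symbols would not be recoverable from $p$, the per--cylinder values could differ, and the state process would in general fail to be $1$--order Markov.
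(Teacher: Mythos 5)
Your proof is correct, but it takes a genuinely different route from the paper's. The paper does not argue the Markov property directly at all: it treats the set of minterms $T$ as a generator of a free monoid, fixes an isomorphism $\phi: T^{*} \rightarrow \Sigma^{*}$ onto the free monoid over a $k$-letter alphabet, relabels the \sfa\ $M$ under $\phi$ to obtain a classical deterministic automaton $N$ with the same state/transition structure and the same behaviour up to $\phi$, and then invokes the already-known result for classical $m$-unambiguous deterministic automata (Nuel et al.) to conclude. Your argument instead re-proves that underlying classical result from first principles: determinism makes the state history a function of the symbol history (so the conditioning event is a union of symbol-prefix cylinders), $m$-unambiguity makes $\delta^{-m}(p)$ a singleton (so every cylinder compatible with $Y_{i-1}=p$ agrees on its last $m$ symbols), and the $m$-order Markov assumption then collapses each per-cylinder conditional to the same value, which factors out of the weighted average. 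The paper's route buys brevity and cleanly separates the symbolic-vs-classical issue (the only genuinely new content here) from the probabilistic one; yours buys self-containedness and makes explicit where each hypothesis is used --- in particular your closing observation about why the claim fails without $m$-unambiguity is exactly the content the paper delegates to its citation. One small caveat: your uniqueness argument for the symbol $a$ with $\delta(p,a)=q$ (via $\delta^{-m}(q)$ being a singleton) requires $m \geq 1$; for $m=0$ several minterms may lead from $p$ to $q$ and the entry $\Pi(p,q)$ should then be a sum over them --- a point the paper's formula glosses over as well.
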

\begin{proof}
This result holds for classical $m$--unambiguous deterministic automata \cite{nuel_pattern_2008}.
For the symbolic case,
note that,
from an algebraic point of view,
the set $T=\mathit{Minterms}(\mathit{Predicates}(M))$ may be treated as a generator of the monoid $T^{*}$, 
with concatenation as the operation.
If the cardinality of $T$ is $k$,
then we can always find a set $\Sigma=\{a_{1},\cdots,a_{k}\}$ of $k$ distinct symbols and then a morphism (in fact, an isomorphism) $\phi: T^{*} \rightarrow \Sigma^{*}$ that maps each minterm to exactly one, unique $a_{i}$.
A classical deterministic automaton $N$ can then be constructed by relabelling the \sfa\ $M$ under $\phi$,
i.e.,
by copying/renaming the states and transitions of the original \sfa\ $M$ and by replacing the label of each transition of $M$ by the image of this label under $\phi$.
Then, the behavior of $N$ (the language it accepts) is the image under $\phi$ of the behavior of $M$ \cite{sakarovitch_elements_2009}.
Based on these observations, 
we can see that the sequence of states visited by an $m$--unambiguous deterministic \sfa\ is indeed a 1--order Markov chain,
as a direct consequence of the fact that a deterministic \sfa\ has the same behavior (up to isomorphism) to that of a classical deterministic automaton, constructed through relabelling.
\end{proof}

Therefore, 
after constructing a deterministic \sfa\,
we can use it to construct a PMC and learn its transition matrix.
The conditional probabilities in this case are essentially probabilities of seeing an event that will satisfy a predicate,
given that the previous event(s) have satisfied the same or other predicates.
For example,
if $m=1$ and $\mathit{Predicates}(M)=\{\psi_{1},\psi_{2}\}$, 
one such conditional probability would be $P(\psi_{1}(t_{i+1}) \wedge \psi_{2}(t_{i+1}) =  \trueb \mid \neg \psi_{1}(t_{i}) \wedge \neg \psi_{2}(t_{i}) = \trueb )$,
i.e.,
the probability of seeing an event that will satisfy both $\psi_{1}$ and $\psi_{2}$ given that the current, last seen event satisfies neither of these predicates.
As with classical automata,
we can then provide again forecasts based on the waiting--time distributions of the PMC derived from a \sfa.

\section{Experimental Results}
\label{sec:symbolic_experiments}
Wayeb is a Complex Event Forecasting engine based on symbolic automata, 
written in the Scala programming language.
It was tested against two real--world datasets coming from the field of maritime monitoring.
When sailing at sea, (most) vessels emit messages relaying information about their position, heading, speed, etc.: the so-called AIS (automatic identification system) messages. 
Such a stream of AIS messages can then be used in order to detect interesting patterns in the behavior of vessels \cite{patroumpas2017online}.
Two AIS datasets were used, made available in the datAcron project\footnote{\href{http://datacron-project.eu/}{http://datacron-project.eu/}}: 
the first contains AIS kinematic messages from vessels sailing in the Atlantic Ocean around the port of Brest, France, and span a period from 1 October 2015 to 31 March 2016 \cite{ray_cyril_2018_1167595}; 
the second was provided by IMISG\footnote{\href{https://imisglobal.com/}{https://imisglobal.com/}} and contains AIS kinematic messages from most of Europe 
(the entire Mediterranean Sea and parts of the Atlantic Ocean and the Baltic Sea), 
spanning an one--month period from 1 January 2016 to 31 January 2016.
AIS messages can be noisy, redundant and typically arrive at unspecified time intervals.
We first processed our datasets in order to produce clean and compressed trajectories, 
consisting of critical points,
i.e., important points that are a summary of the initial trajectory, 
but allow for an accurate reconstruction \cite{patroumpas2017online}.
Subsequently,
we sampled the compressed trajectories by interpolating between critical points
in order to get trajectories where each point has a temporal distance of one minute from its previous point.
After excluding points with null speed (in order to remove stopped vessels),
the final streams consist of $\approx$ 1.3 million points for the Brest dataset and $\approx$ 2.4 million points for the IMISG dataset.
The experiments were run on a machine with Intel Core i7-4770 CPU @ 3.40GHz processors and 16 GB of memory.

We have chosen to demonstrate our forecasting engine on two important patterns.
The first concerns a movement pattern in which vessels approach a port and the goal is to forecast when a vessel will enter the port.
This is a key target of several vessel tracking software platforms, 
as indicated by the 2018 challenge of the International Conference on Distributed and Event-Based Systems \cite{debs18challenge}. 
The second pattern concerns a fishing maneuver inside a fishing area.
Forecasting when vessels are about to start fishing could be important in order to manage the pressure exerted on fishing areas \cite{datacron51}.

The \emph{approaching} pattern may be defined as follows:
\begin{equation}
\label{pattern:approaching}
\begin{aligned}
R_{1} := & 	x \cdot y^{+} \cdot z \ \where \\
			& 	\mathit{Distance(x,\mathit{PortCoords},7.0,10.0)}\ \and\ \mathit{Distance(y,\mathit{PortCoords},5.0,7.0)}\ \and \\
			& 	\mathit{WithinCircle(z,\mathit{PortCoords},5.0)} \ \partition\ \mathit{vesselId} 
\end{aligned}
\end{equation}
Concatenation is denoted by $\cdot$ and $^{+}$ stands for Kleene$+$.
We want to start detecting a vessel's movement whenever a vessel is between 7 and 10 km away from a specific port,
then it approaches the port, 
with its distance from it falling to the range of 5 to 7 km,  
stays in that range for 1 or more messages, 
and finally enters the port, 
defined as being inside a circle with a radius of 5 km around the port.
We have chosen the above syntax in order to make clear what the regular part of a pattern is (before the \where\ keyword) and what its logical part is (after \where),
but predicates can be placed directly in the regular expression.
Note also that the first argument of a predicate is the event upon which it is to be applied,
but we also allow for other arguments to be passed as constants. 
The \emph{partition contiguity} selection strategy is employed \cite{zhang_complexity_2014},
i.e.,
for each new vessel appearing in the stream,
a new automaton run is created,
being responsible for this vessel.
This is just a special case of parametric trace slicing typically used in runtime verification tools \cite{chen2009parametric}.
If we use this pattern to build a PMC,
the transition probabilities will involve the three predicates that appear in it.
However, it is reasonable to assume that other features of a vessel's kinematic behavior could affect the accuracy of forecasts, 
e.g.,
its speed.
For this reason, 
we have also added a mechanism to our module that can incorporate in the PMC some extra features,
declared by the user,
but not present in the pattern itself.
The extra features we decided to add concern the vessel's speed and its heading:
$\mathit{SpeedBetween(x,0,10.0)}$, $\mathit{SpeedBetween(x,10.0,20.0)}$, $\mathit{SpeedBetween(x,20.0,30.0)}$ and $\mathit{HeadingTowards(x,PortCoords)}$.
The first three try to use the speed level of a vessel,
whereas the last one uses the vessel's heading and checks whether it is headed towards the port.
Our experiments were conducted using the main Brest port as the port of reference.

The \emph{fishing} pattern may be defined as follows:
\begin{equation}
\label{pattern:fishing}
\begin{aligned}
R_{2} := & x \cdot y^{*} \cdot z \ \where \\
			& 	(\mathit{IsFishingVessel(x)} \wedge \neg \mathit{InArea(x,FishingArea)}) \ \and \\ 
			&   (\mathit{InArea(y,FishingArea)} \wedge \mathit{SpeedBetween(y,9.0,20.0)}) \ \and \\
			& 	(\mathit{InArea(z,FishingArea)} \wedge \mathit{SpeedBetween(z,1.0,9.0)}) \\
			& 	\partition\ \mathit{vesselId}
\end{aligned}
\end{equation}
This definition attempts to capture a movement of a fishing vessel,
in which it is initially outside a specific fishing area,
then enters the area with a traveling speed (between 9 to 20 knots), 
remains there for zero or more messages,
and finally starts moving with a fishing speed (between 1 to 9 knots) while still in the same area.
We also used the same extra features as in Pattern \ref{pattern:approaching},
with the difference that the $\mathit{Heading}$ feature now concerns the area.
Note that by partitioning by $\mathit{vesselId}$ we don't have to add the $\mathit{IsFishingVessel}$ predicate for $y$ and $z$.

\begin{figure}[t]
    \centering
    \begin{subfigure}[b]{0.49\textwidth}
        \includegraphics[width=\textwidth]{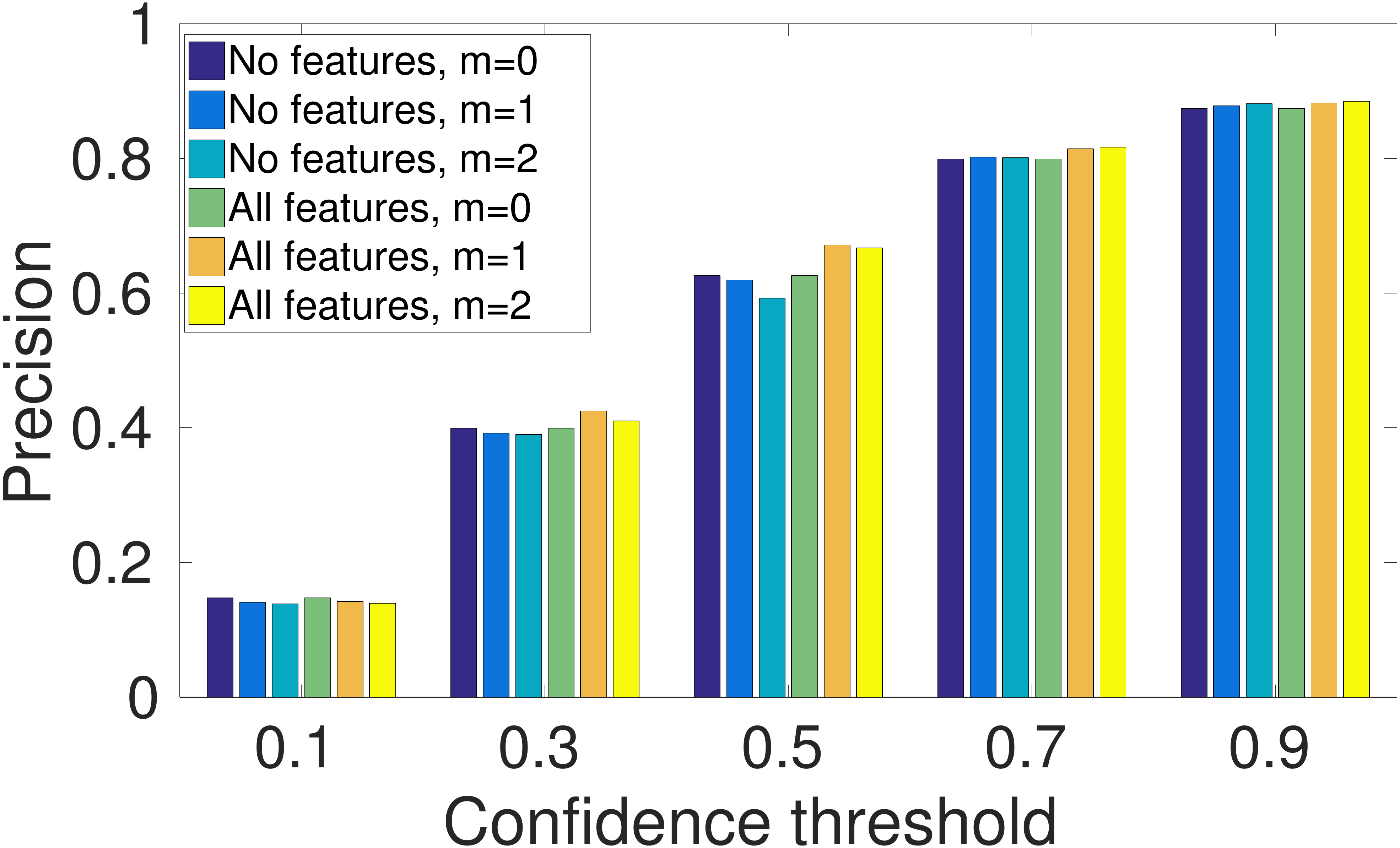}
        \caption{Precision (approaching pattern).}\label{fig:symbolic:port_prec}
    \end{subfigure}
    \begin{subfigure}[b]{0.49\textwidth}
        \includegraphics[width=\textwidth]{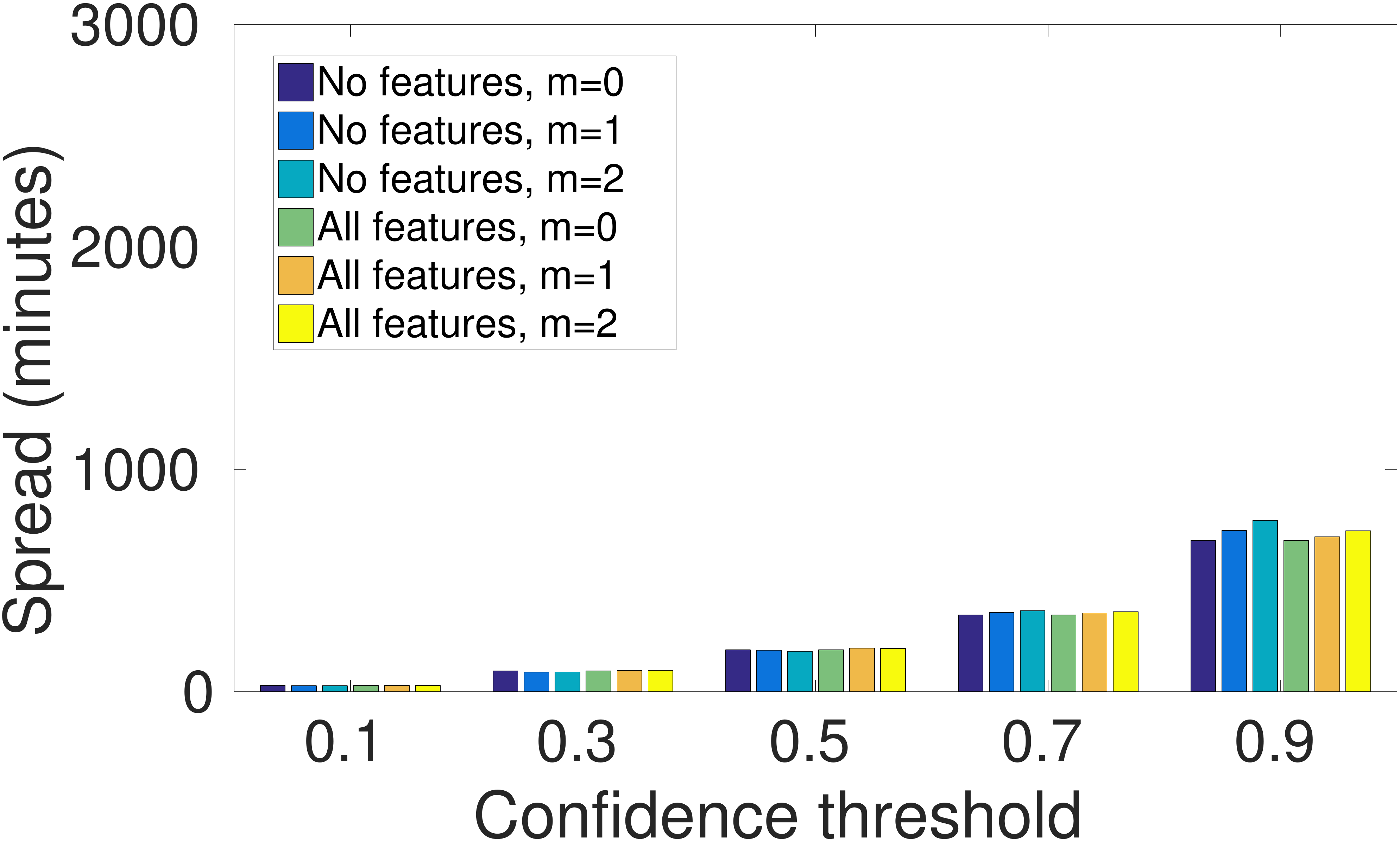}
        \caption{Spread (approaching pattern).}\label{fig:symbolic:port_spread}
    \end{subfigure}
    \hfill
    \begin{subfigure}[b]{0.49\textwidth}
        \includegraphics[width=\textwidth]{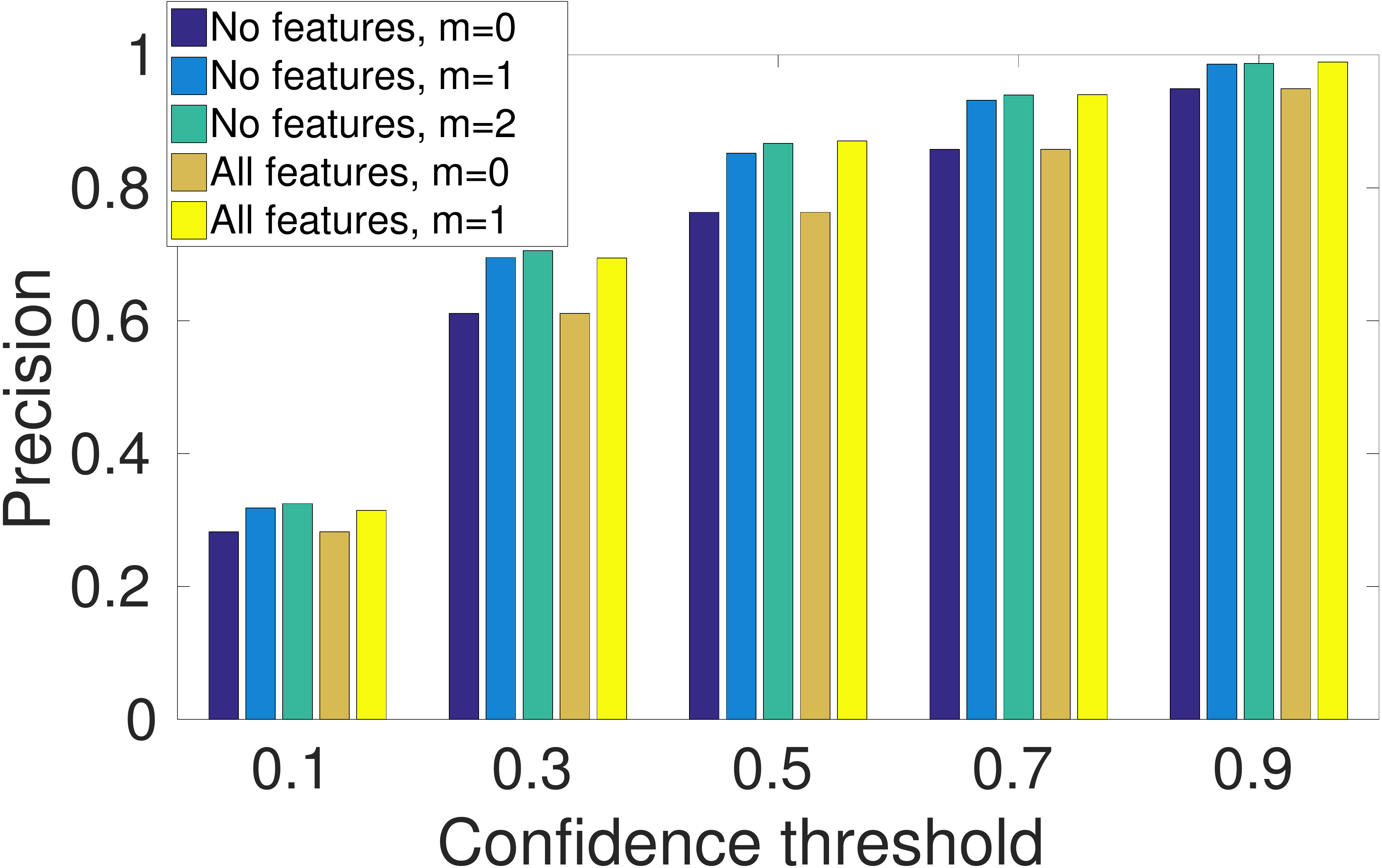}
        \caption{Precision (fishing pattern).}\label{fig:symbolic:fish_prec}
    \end{subfigure}
    \begin{subfigure}[b]{0.49\textwidth}
        \includegraphics[width=\textwidth]{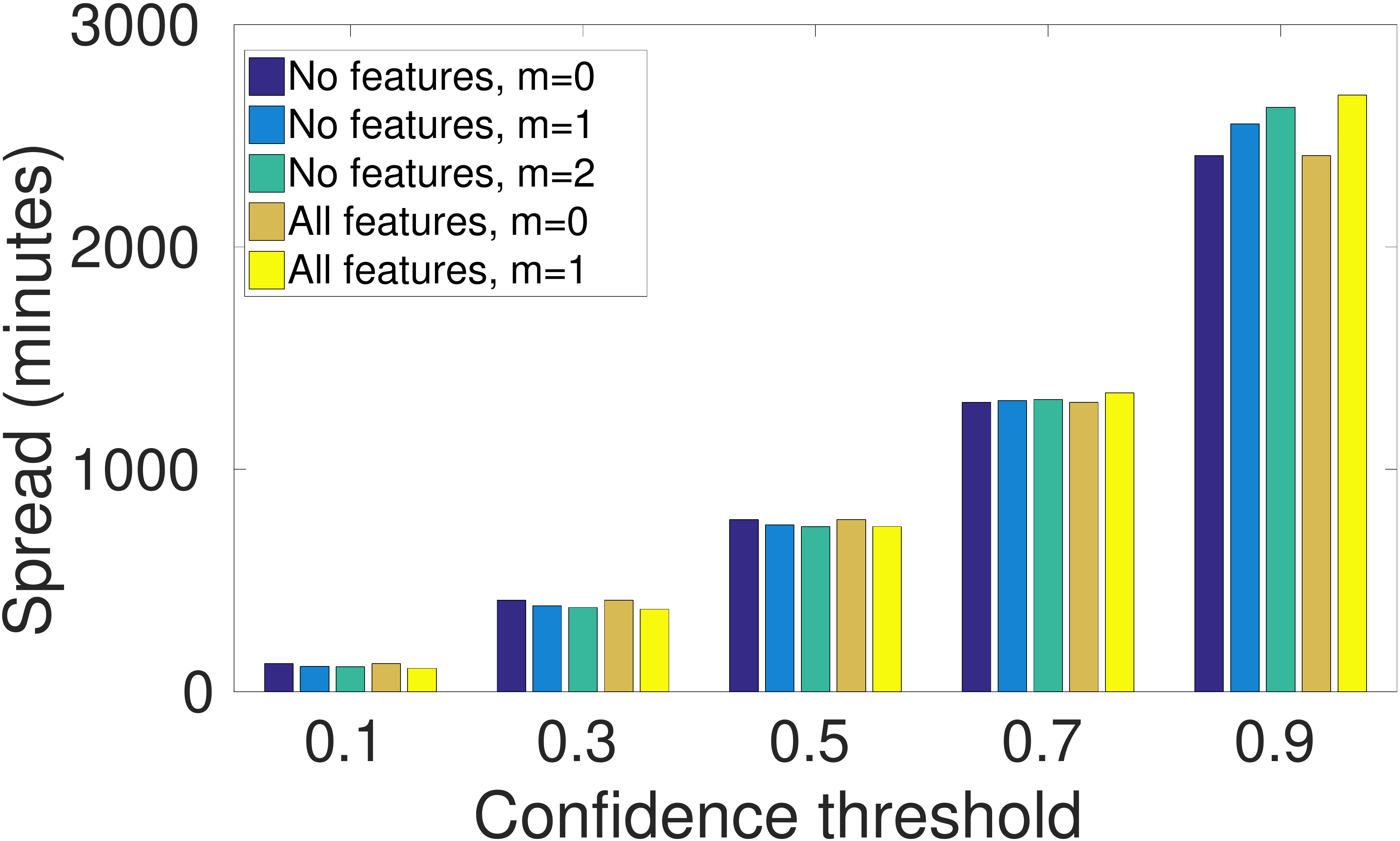}
        \caption{Spread (fishing pattern).}\label{fig:symbolic:fish_spread}
    \end{subfigure}
    \caption{Results for approaching the Brest port and for fishing.}\label{fig:symbolic:port_fish}
\end{figure}

Figure \ref{fig:symbolic:port_fish} shows
\emph{precision} and \emph{spread} results on the Brest dataset for different values of the forecasting confidence threshold $\theta_{fc}$
and of the assumed order $m$.
Precision is defined as the percentage of forecasts that were correct,
i.e.,
whose interval includes the timestamp of the complex event's actual occurrence.
Spread is defined as the length of a forecast interval.
The smaller the spread, the more focused a forecast is and thus more valuable.
Good results are therefore considered those with high precision scores (ideally 1.0) and low spread scores (ideally 0, i.e., single point forecasts).
For each confidence threshold,
each bar represents a different variation of the pattern.
We vary the order $m$ of the PMC in order to investigate the possible gains of looking deeper into the past.
There are also variations where no extra features are used (i.e., only the predicates of the original pattern are included in the PMC) and where all the extra features are included.
Our engine can always achieve precision scores that are above the confidence threshold.
However, note that,
as the confidence threshold increases,
the spread also tends to increase.
This happens because the engine tries to satisfy the $\theta_{fc}$ constraint by expanding the forecast intervals.
Looking back at the example of Figure \ref{fig:wt1},
if, instead of $\theta_{fc}=50\%$,
we set $\theta_{fc}=70\%$,
then the green interval shown at the top would have to be expanded to the right so that its probability exceeds the new increased threshold (expanding to the left would be pointless since only points with zero probability exist in this region).
It is interesting to also note that,
when we include the extra features,
the spread is lower for $\theta_{fc}=0.9$,
indicating that these features help in producing more focused forecasts.
Although a similar pattern can be observed for the \emph{fishing} pattern as well,
a more careful examination reveals that this pattern is more challenging.
The precision scores are higher,
but the spread is also significantly higher for all values of $\theta_{fc}$.
This means that accurately pinpointing when a fishing pattern will be detected is more difficult.
This result is also expected, 
since fishing maneuvers (often occurring in the open sea) exhibit a greater degree of variability,
whereas movement patterns while approaching a port are more or less straightforward.
The trade-off between precision and spread is thus always present,
but its exact nature also depends on both the pattern itself and the included features.
We also see that increasing $m$ does not seem to have a significant impact,
at least, with the predicates chosen here.
Note that this is not always the case,
since $m$ can indeed play a significant role in other domains and/or patterns \cite{alevizos2017event}.
As a general comment,
Figure \ref{fig:symbolic:port_fish} can help a user determine a satisfactory set of parameter values.
For example, for the approaching pattern,
a user could choose to set $\theta_{fc}=50\%$ and $m=0$,
which gives a high enough precision with relatively low spread and avoids the cost of disambiguation that accompanies any higher values of $m$.

In order to assess Wayeb's throughput, 
we run another series of experiments.
We tested the \emph{approaching} pattern on both the Brest and the Europe datasets.
For the Brest dataset,
all of the 222 ports of Brittany were included,
each with its own \sfa\ and PMC.
For the Europe dataset, 
222 European ports were randomly selected.
We tested for throughput both when forecasting is performed and when it is disabled,
in which case only recognition is performed.
The results are shown in Figure \ref{fig:symbolic:throughputs}.
As expected, throughput is lower when forecasting is enabled. 
However, the overhead is not significant. 
We also see that throughput is higher for the Europe dataset.
It is possible to achieve higher throughput because the event rate of the input stream is also higher.
The throughput is higher in this case,
despite the fact that it contains almost ten times as many vessels and has a higher incoming event rate,
indicating that Wayeb can scale well as the number of monitored objects increases.

\begin{figure}[t]
    \centering
    \includegraphics[width=0.6\textwidth]{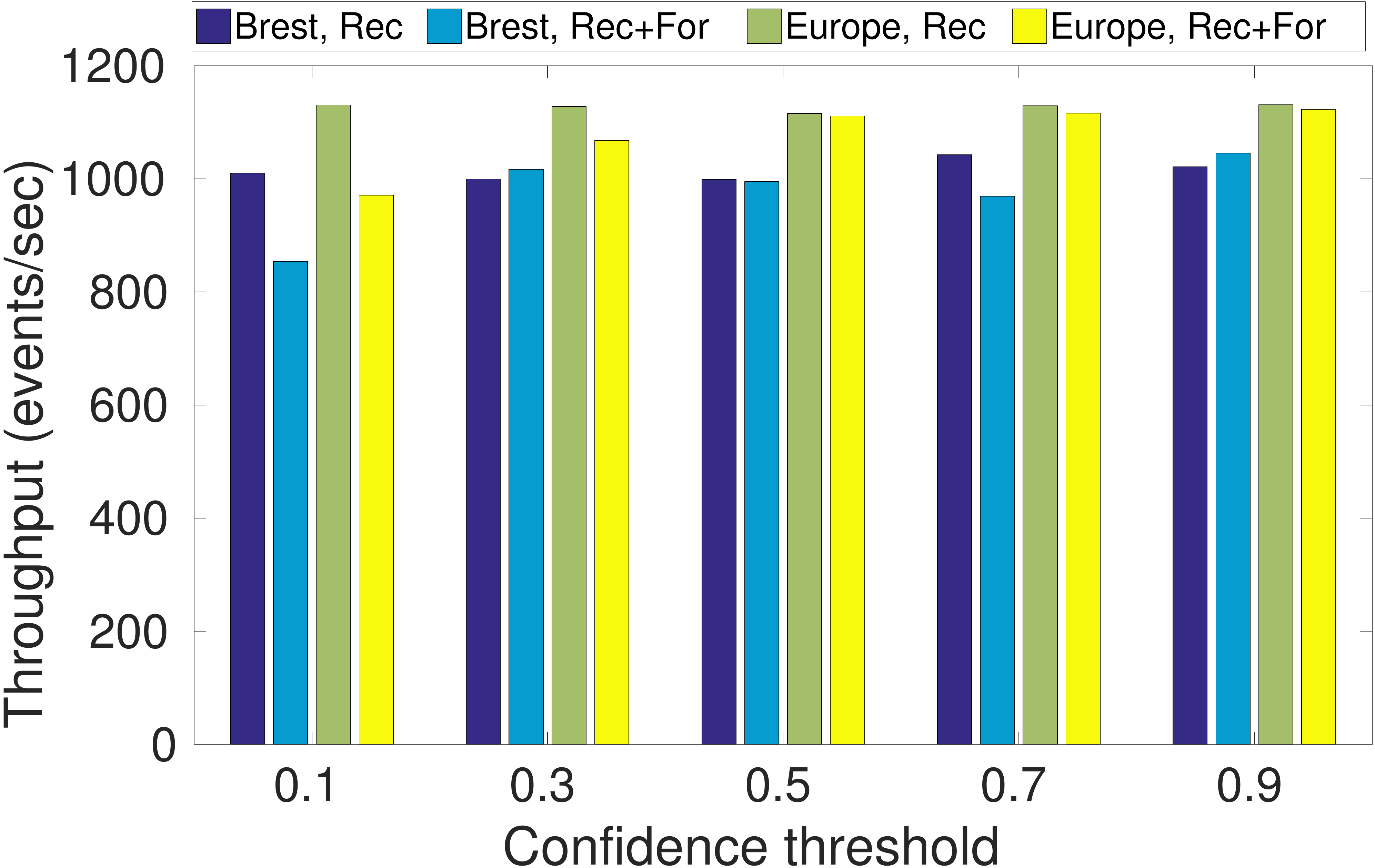}
    \caption{Throughput results. ``Rec'' denotes recongition only, with forecasting disabled. ``Rec+For'' denotes that both recognition and forecasting are enabled.}\label{fig:symbolic:throughputs}
\end{figure}

\section{Discussion}

Wayeb is one of the few CEP systems capable of the form of forecasting presented in this paper.
The only other system with similar capabilities is presented in \cite{muthusamy_predictive_2010},
using automata and Markov chains as well. 
The advantage of our method is that it allows for a deeper investigation of the past,
if this is needed, 
by being able to handle higher order Markov processes.
By using symbolic automata,
our computational model also has nice compositional properties
(a feature generally lacking in CEP)
and can accommodate any regular expression, 
without being restricted to sequential patterns.
Having predicates on the transitions also allows for an easy incorporation of Boolean expressions and of background knowledge.
For example, 
note that the $\mathit{IsFishingVessel(x)}$ predicate in Pattern \ref{pattern:fishing} is evaluated not by using information from the streaming AIS messages (which may not always contain correct information about a vessel's type) but by using the relevant background knowledge of the list of fishing vessels.
A significant number of forecasting methods comes from the field of temporal pattern mining,
e.g., \cite{vilalta_predicting_2002,laxman_stream_2008,zhou_pattern_2015}.
These are typically unsupervised methods with a focus on predicting what the next \emph{input} event(s) in a stream might be.
The same is true for the various sequence prediction methods \cite{begleiter2004prediction}. 
However, 
predicting the next input event(s) is not of the highest priority in CEP.
In fact, 
CEP engines typically employ \emph{selection strategies} that allow for ignoring input events that are not relevant for a given pattern.
Most input events might thus be irrelevant for a pattern and making predictions about them is not useful.
Instead, it is more important to forecast when a complex event will be detected,
as can be done with our method.
This is the reason why we cannot compare Wayeb to these methods,
since they have a different formulation of the forecasting task.
Moreover, sequence prediction methods,
as well as predictive CEP methods inspired by sequence prediction approaches \cite{gillani_pi-cep_2017},  
are based on the assumption of both a finite alphabet and a language of finite cardinality, 
basically excluding iteration.
Both of these assumptions do not hold in real-world applications and our method does not require them.

Although sequence prediction methods are not suitable for complex event forecasting as they are,
they employ techniques that could be useful in our case as well.
For example, 
due to the high cost of increasing the assumed order $m$ of the Markov process,
they employ variable--order Markov models \cite{buhlmann1999variable}.
We also intend to explore this direction,
in order to avoid the combinatorial explosion on the number of states of a PMC.
Another research direction is that of finding ways to compactly represent the past \cite{tino2001predicting} without having to enumerate every possible combination.
Finally,
the predicates of symbolic automata are unary and are applied only to the last event read from the stream.
This is a serious limitation for CEP where patterns are required having constraints between the last event and events seen in the past.
We intend to investigate if and how our method can be extended to other automata models that provide this functionality,
like extended symbolic automata \cite{dantoni_extended_2015} and quantified event automata \cite{barringer_quantified_2012}.

\section{Acknowledgments}
This work was supported by the EU H2020 datAcron project (grant agreement No 687591).

\label{sect:bib}
\bibliographystyle{abbrv}
\bibliography{refs}


\end{document}